\newtheorem{mydef}{Definition}
\newtheorem{myprop}{Proposition}
\title{A ReLU Dense Layer to Improve the Performance of Neural Networks}
\name{Alireza M. Javid, Sandipan Das, Mikael Skoglund, and Saikat Chatterjee}
\address{School of Electrical Engineering and Computer Science\\
KTH Royal Institute of Technology, Sweden\\
\{almj, sandipan, skoglund, sach\}@kth.se}
\begin{document}
\ninept
\maketitle
\begin{abstract}
We propose ReDense as a simple and low complexity way to improve the performance of trained neural networks. We use a combination of random weights and rectified linear unit (ReLU) activation function to add a ReLU dense (ReDense) layer to the trained neural network such that it can achieve a lower training loss. The lossless flow property (LFP) of ReLU is the key to achieve the lower training loss while keeping the generalization error small. ReDense does not suffer from vanishing gradient problem in the training due to having a shallow structure. We experimentally show that ReDense can improve the training and testing performance of various neural network architectures with different optimization loss and activation functions. Finally, we test ReDense on some of the state-of-the-art architectures and show the performance improvement on benchmark datasets.
\end{abstract}
\begin{keywords}
Rectified linear unit, random weights, deep neural network
\end{keywords}
\section{Introduction}
\label{sec:intro}
Artificial neural networks (ANNs) and deep learning architectures have received enormous attention over the last decade. The field of machine learning is enriched with appropriately trained neural network architectures such as deep neural networks (DNNs) \cite{DNN_2013} and convolutional neural networks (CNNs) \cite{CNN_2012}, outperforming the classical methods in different classification and regression problems \cite{Russakovsky2015,DodgeK17b}. However, very little is known regarding what specific neural network architecture works best for a certain type of data. For example, it is still a mystery that for a given image dataset such as CIFAR-10, how many numbers of hidden neurons and layers are required in a CNN to achieve better performance. Typical advice is to use some rule-of-thumb techniques for determining the number of neurons and layers in a NN or to perform an exhaustive search which can be extremely expensive \cite{NodeNum_2015}. Therefore, it frequently happens that even an appropriately trained neural network performs lower than expected on the training and testing data. We address this problem by using a combination of random weights and ReLU activation functions to improve the performance of the network.

There exist three standard approaches to improve the performance of a neural network - avoiding overfitting, hyperparameter tuning, and data augmentation. Overfitting is mostly addressed by $\ell_1$ or $\ell_2$ regularization \cite{Regularization2017}, Dropout \cite{dropout2014}, and early stopping techniques \cite{Prechelt2012}. Hyperparameter tuning of the network via cross-validation includes choosing the best number of neurons and layers, finding the optimum learning rate, and trying different optimizers and loss functions \cite{yu2020hyperparameter}. And finally, the data augmentation approach tries to enrich the training set by constructing new artificial samples via random cropping and rotating, rescaling \cite{dataAugmentation2019}, and mixup \cite{zhang2018mixup}. However, all of the above techniques require retraining the network from scratch while there is no theoretical guarantee of performance improvement. Motivated by the prior uses of random matrices in neural networks as a powerful and low-complexity tool \cite{HNF2019,giryes_randomweights,mathematicsDN,Liang2020,PLN_Saikat}, we use random matrices and lossless-flow-property (LFP)  \cite{SSFN_Saikat} of ReLU activation function to theoretically guarantee a reduction of the training loss.

\textbf{Our Contribution: } We propose to add a ReDense layer instead of the last layer of an optimized network to improve the performance as shown in Figure \ref{fig:ReDense}. We use a combination of random weights and ReLU activation functions in order to theoretically guarantee the reduction of the training loss. Similar to transfer learning techniques, we freeze all the previous layers of the network and only train the ReDense layer to achieve a lower training loss. Therefore, training ReDense is fast and efficient and does not suffer from local minima or vanishing gradient problem due to its shallow structure. We test ReDense on MNIST, CIFAR-10, and CIFAR-100 datasets and show that adding a ReDense layer leads to a higher classification performance on a variety of network architecture such as MLP \cite{Schmidhuber_2015}, CNN \cite{CNN_2012}, and ResNet-50 \cite{he2015residual}.

\begin{figure*}[t!]
	\centering
    \begin{multicols}{2}
		\includegraphics[width=0.45\textwidth]{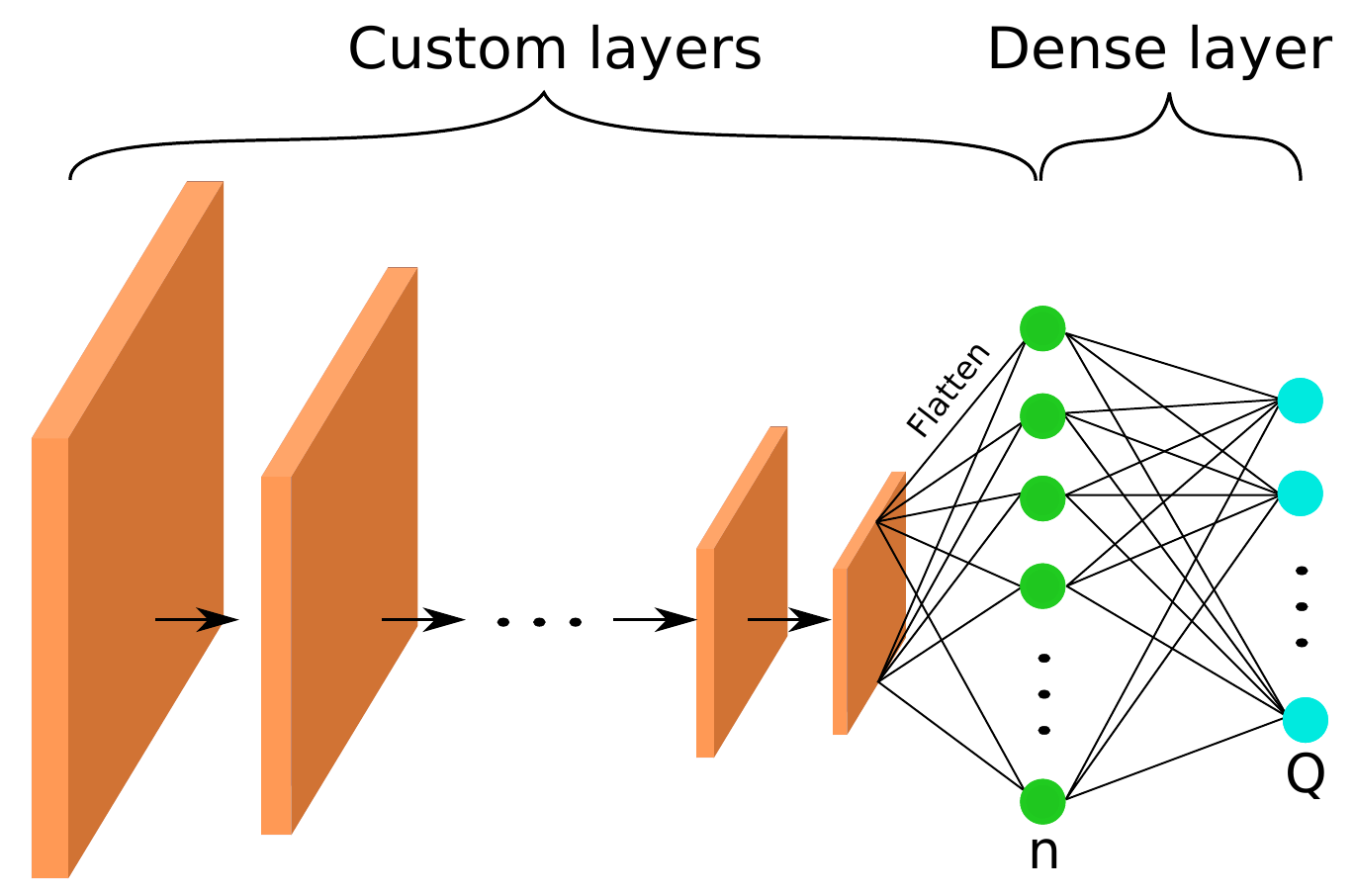}
		\subcaption{Original network}
		\includegraphics[width=0.48\textwidth]{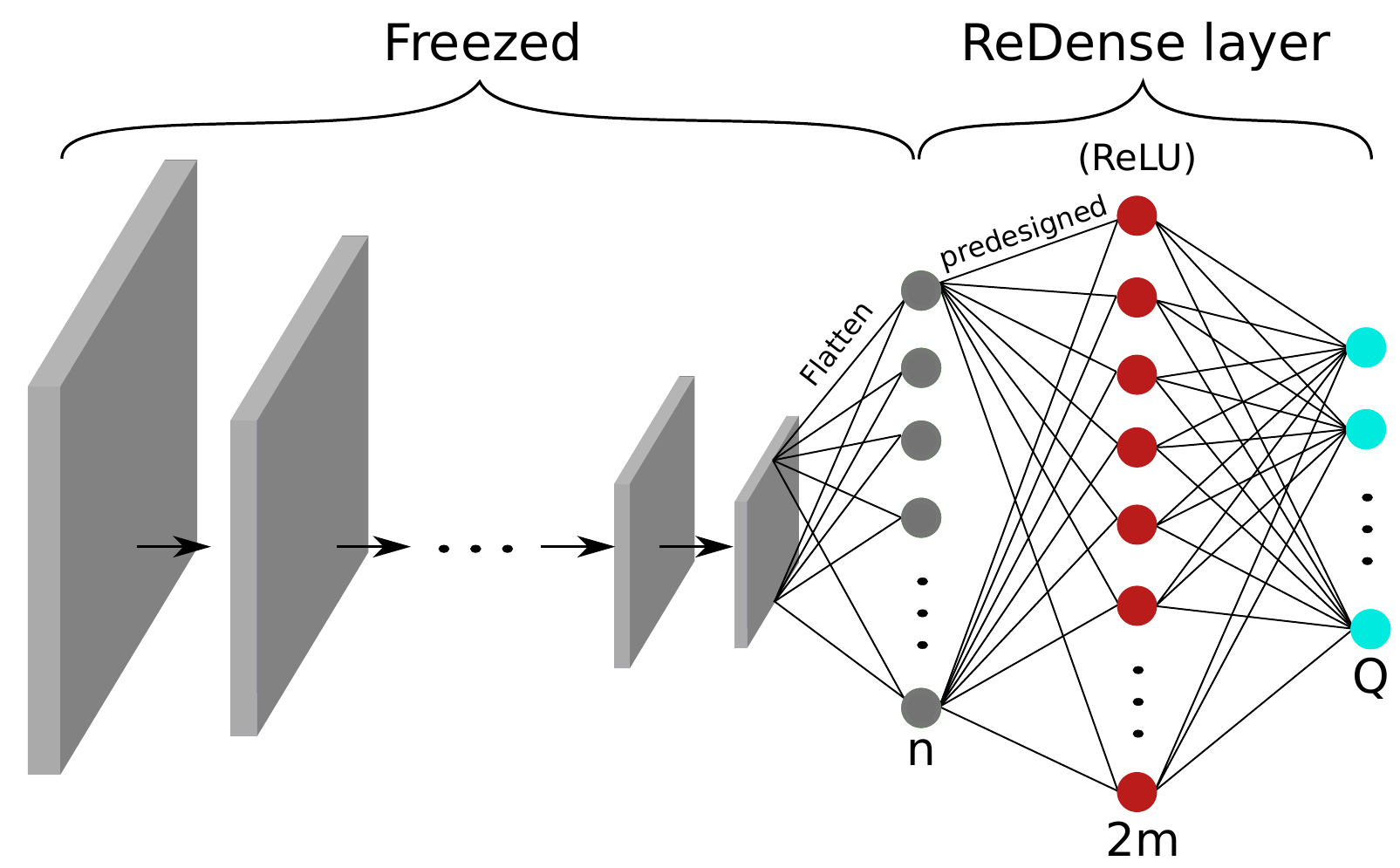}
		\subcaption{With ReDense}
	\end{multicols}
	\caption{Illustraition of ReDense training. ReDense can be seen as a combination of predesigned ReLU layer with $m \ge n$ and a dense layer.}
	\label{fig:ReDense}
\end{figure*}

\subsection{Artificial Neural Networks}
Consider a dataset containing $J$ samples of pair-wise $P$-dimensional input data $\mathbf{x}_j \in \mathbb{R}^{P}$ and $Q$-dimensional target vector $\mathbf{t}_j \in \mathbb{R}^{Q}$ as $\mathcal{D}=\{(\mathbf{x}_j,\mathbf{t}_j)\}_{j=1}^J$. Let us define the $j$-th feature vector at second last layer of a ANN as $\mathbf{y}_{j} = f_{\boldsymbol{\theta}}(\mathbf{x}_j) \in \mathbb{R}^{n}$, where $\boldsymbol{\theta}$ represents the parameters of all the previous layers of the network. Note that $n$ is the number of hidden neurons in the second last layer of the network.
Let us define the weights of the last layer of network by $\mathbf{O} \in \mathbb{R}^{Q \times n}$. We refer to this weight matrix as the `output weight' in the rest of the manuscript. The training of the network can be done as follows
\begin{align}
\hat{\mathbf{O}},\hat{\boldsymbol{\theta}} \in \underset{ \mathbf{O},\boldsymbol{\theta} }{ \arg\min } \,\, \sum_{j=1}^{J} \mathcal{L}(\mathbf{t}_j,\mathbf{O}f_{\boldsymbol{\theta}}(\mathbf{x}_j)) + \mathcal{R}(\mathbf{O},\boldsymbol{\theta}),
\label{eq:ANN_minimization}
\end{align}
where $\mathcal{L}$ is the loss function over a single training pair $(\mathbf{x}_j,\mathbf{t}_j)$, and $\mathcal{R}$ is the regularization term to avoid overfitting, e.g., a simple $\ell_2$-norm weight decay. Examples of loss function $\mathcal{L}$ include cross-entropy loss, mean-square loss, and Huber loss. After the training, we have access to the optimized feature vector at the second last layer. We refer to this feature vector as $\hat{\mathbf{y}}_{j} = f_{\hat{\boldsymbol{\theta}}}(\mathbf{x}_j) \in \mathbb{R}^n$ in the following sections. Note that equation \eqref{eq:ANN_minimization} is general in the sense that $f_{\boldsymbol{\theta}}(.)$ can represent any arbitrary NN architecture combined of dropout layer, convolutional layer, pooling layer, skip connections, etc. The popular optimization methods used for solving \eqref{eq:ANN_minimization} are Adagrad, ADAM, Adadelta, and their stochastic mini-batch variants \cite{ruder2017overview}.   

\subsection{Lossless Flow Property of ReLU}
We briefly discuss lossless flow property (LFP) of ReLU activation function in this section for completeness. Details can be found in \cite{SSFN_Saikat}. The proposed LFP is the key to design the ReLU layer in order to boost the performance. 
\begin{mydef}[Lossless flow property (LFP)]
	A non-linear activation function $\mathbf{g}(\cdot)$ holds the lossless flow property (LFP) if there exist two linear transformations $\mathbf{V} \in \mathbb{R}^{m \times n}$ and $\mathbf{U} \in \mathbb{R}^{n \times m}$ such that $\mathbf{U} \mathbf{g} (\mathbf{V} \mathbf{z}) = \mathbf{z}, \forall \mathbf{z} \in \mathbb{R}^n$.
\end{mydef}
This property means that the input $\mathbf{z}$ flows through the network without any loss, resulting in $\mathbf{U} \mathbf{g} (\mathbf{V} \mathbf{z}) = \mathbf{z}$. The following proposition illustrates that ReLU activation function holds LFP.
\begin{myprop}
	Consider ReLU activation $\mathbf{g}(\mathbf{z})=\max(\mathbf{z},0)$ and the identity matrix $\mathbf{I}_n \in \mathbb{R}^{n \times n}$. If 
	$
	\mathbf{V} \triangleq \mathbf{V}_n = \left[  
	\begin{array}{c}
	\mathbf{I}_n \\
	- \mathbf{I}_n
	\end{array}
	\right]  \in \mathbb{R}^{2n \times n}
	\,\, \mathrm{and} \,\, 
	\mathbf{U} \triangleq \mathbf{U}_n = \left[  
	\mathbf{I}_n  \,\, - \mathbf{I}_n
	\right] \in \mathbb{R}^{n \times 2n}
	$,
	then $\mathbf{U}_n \mathbf{g} (\mathbf{V}_n \mathbf{z}) = \mathbf{z}$ holds for every $\mathbf{z} \in \mathbb{R}^n$.
	\label{Proposition_LFP}
\end{myprop}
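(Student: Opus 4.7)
The plan is to verify the identity $\mathbf{U}_n \mathbf{g}(\mathbf{V}_n \mathbf{z}) = \mathbf{z}$ by direct computation, exploiting the standard decomposition of a real vector into its positive and negative parts. For any $\mathbf{z} \in \mathbb{R}^n$ with components $z_i$, write $z_i^+ = \max(z_i, 0)$ and $z_i^- = \max(-z_i, 0)$; the elementary observation that underlies the whole argument is the componentwise identity $z_i = z_i^+ - z_i^-$.

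First I would compute $\mathbf{V}_n \mathbf{z}$, which by the block structure of $\mathbf{V}_n$ stacks $\mathbf{z}$ on top of $-\mathbf{z}$, producing the vector $[\mathbf{z}^\top, -\mathbf{z}^\top]^\top \in \mathbb{R}^{2n}$. Next I would apply the ReLU activation $\mathbf{g}$ componentwise to this stacked vector. In the top block, the $i$-th entry becomes $\max(z_i, 0) = z_i^+$, and in the bottom block the $i$-th entry becomes $\max(-z_i, 0) = z_i^-$. Hence $\mathbf{g}(\mathbf{V}_n \mathbf{z})$ equals the stacked vector whose top half is the positive part $\mathbf{z}^+$ and whose bottom half is the negative part $\mathbf{z}^-$.

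Finally I would apply $\mathbf{U}_n = [\mathbf{I}_n \,\, -\mathbf{I}_n]$ to this vector. By the block structure, the product is $\mathbf{I}_n \mathbf{z}^+ - \mathbf{I}_n \mathbf{z}^- = \mathbf{z}^+ - \mathbf{z}^-$, which componentwise is $z_i^+ - z_i^- = z_i$. Therefore $\mathbf{U}_n \mathbf{g}(\mathbf{V}_n \mathbf{z}) = \mathbf{z}$ for every $\mathbf{z} \in \mathbb{R}^n$, as required. There is really no obstacle here: the proof is essentially a one-line application of the positive/negative part decomposition, and the only thing worth being careful about is confirming that ReLU is applied componentwise so the block structure of $\mathbf{V}_n$ and $\mathbf{U}_n$ interacts cleanly with it.
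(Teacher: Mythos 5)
Your proof is correct and is exactly the ``simple calculation'' the paper alludes to (the paper omits the details, deferring to a reference): stacking $\mathbf{z}$ over $-\mathbf{z}$, applying ReLU to get the positive and negative parts, and recombining via $\mathbf{z}^+ - \mathbf{z}^- = \mathbf{z}$. No gaps; nothing further needed.
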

The above proposition can be easily proved by a simple calculation. It is shown in \cite{SSFN_Saikat} that LFP property holds for the family of ReLU-based activations such as leaky-ReLU.

\section{ReLU Dense Layer}
\label{sec:ReBoost}
Consider a trained neural network using the training dataset $\mathcal{D}=\{(\mathbf{x}_j,\mathbf{t}_j)\}_{j=1}^J$. Let us define the optimized output weight as $\hat{\mathbf{O}}$ and the corresponding feature vectors as
\begin{align}
\hat{\mathbf{y}}_{j} = f_{\hat{\boldsymbol{\theta}}}(\mathbf{x}_j) \in \mathbb{R}^n,
\label{eq:ANN_feature_vectors}
\end{align}
where $f_{\hat{\boldsymbol{\theta}}}(\cdot)$ represents the optimized preceding layers of the network. The optimized training loss can be written as
$\mathcal{L}_o=\sum_{j=1}^{J} \mathcal{L}(\mathbf{t}_j,\hat{\mathbf{O}}\hat{\mathbf{y}}_{j})$,
where $\mathcal{L}_o$ stands for old loss that we wish to improve. To this end, we want to add a new layer to the network that can provides us with a lower training loss. Let us construct a new ReLU layer as follows
\begin{align}
\bar{\mathbf{y}}_{j} = \mathbf{g} (\mathbf{V}_m \mathbf{R} \hat{\mathbf{y}}_{j}) \in \mathbb{R}^{2m},
\label{eq:ReBoost_featuree_vectors}
\end{align}
where $\mathbf{g}(\cdot)$ is ReLU activation function, $\mathbf{V}_m = \left[  
\begin{array}{c}
\mathbf{I}_m \\
- \mathbf{I}_m
\end{array}
\right]$, and $\mathbf{R} \in \mathbb{R}^{m \times n}$, $m \ge n$ is an instance of normal distribution. The training of ReDense is done as follows 
\begin{align}
\mathbf{O}^{\star} \in \underset{ \mathbf{O}}{ \arg\min } \,\, \sum_{j=1}^{J} \mathcal{L}(\mathbf{t}_j,\mathbf{O}\bar{\mathbf{y}}_{j}) \,\,\, \mathrm{s.t.} \,\,\, \|\mathbf{O}\|_F \le \epsilon,
\label{eq:ReBoost_minimization}
\end{align}
where $\epsilon$ is the regularization hyperparameter. In the following, we show that by properly choosing the hyperparameter $\epsilon$, we can achieve a lower training loss than $\mathcal{L}_o$.

\begin{myprop}
	Consider the new training loss achieved by adding the new ReLU layer $\mathcal{L}_n=\sum_{j=1}^{J} \mathcal{L}(\mathbf{t}_j,\mathbf{O}^{\star}\bar{\mathbf{y}}_{j})$, where $\mathbf{O}^{\star}$ is the solution of minimization \eqref{eq:ReBoost_minimization}. Then, there exists a choice of $\epsilon$ for which, we have $\mathcal{L}_n \le \mathcal{L}_o$.
\end{myprop}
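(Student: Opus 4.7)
The plan is to exhibit an explicit feasible $\mathbf{O}$ that exactly reproduces the old predictions $\hat{\mathbf{O}}\hat{\mathbf{y}}_j$ on every training point, and then pick $\epsilon$ large enough so this $\mathbf{O}$ is in the feasible set of \eqref{eq:ReBoost_minimization}. Since $\mathbf{O}^{\star}$ is the minimizer over that set, its loss can only be $\le$ the loss attained by the candidate, which equals $\mathcal{L}_o$ by construction.

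First I would recover $\hat{\mathbf{y}}_j$ from $\bar{\mathbf{y}}_j$ using the LFP of ReLU (Proposition \ref{Proposition_LFP}). Applying $\mathbf{U}_m$ to \eqref{eq:ReBoost_featuree_vectors} gives $\mathbf{U}_m \bar{\mathbf{y}}_j = \mathbf{U}_m\mathbf{g}(\mathbf{V}_m \mathbf{R}\hat{\mathbf{y}}_j) = \mathbf{R}\hat{\mathbf{y}}_j$, so the random projection $\mathbf{R}\hat{\mathbf{y}}_j$ is linearly recoverable from $\bar{\mathbf{y}}_j$. Next, because $\mathbf{R}\in\mathbb{R}^{m\times n}$ is Gaussian with $m\ge n$, it has full column rank with probability one, hence admits a left inverse $\mathbf{R}^{\dagger}=(\mathbf{R}^{\top}\mathbf{R})^{-1}\mathbf{R}^{\top}$ satisfying $\mathbf{R}^{\dagger}\mathbf{R}=\mathbf{I}_n$. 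Composing these two linear maps gives $\mathbf{R}^{\dagger}\mathbf{U}_m\bar{\mathbf{y}}_j = \hat{\mathbf{y}}_j$ for all $j$.

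Now I would define the candidate output weight
\begin{equation*}
\widetilde{\mathbf{O}} \;\triangleq\; \hat{\mathbf{O}}\,\mathbf{R}^{\dagger}\mathbf{U}_m \;\in\;\mathbb{R}^{Q\times 2m},
\end{equation*}
which by the previous step satisfies $\widetilde{\mathbf{O}}\bar{\mathbf{y}}_j = \hat{\mathbf{O}}\hat{\mathbf{y}}_j$ for every $j=1,\dots,J$. Choose any $\epsilon \ge \epsilon_0 \triangleq \|\widetilde{\mathbf{O}}\|_F$, a finite deterministic quantity once $\hat{\mathbf{O}}$ and $\mathbf{R}$ are fixed. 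Then $\widetilde{\mathbf{O}}$ is feasible for \eqref{eq:ReBoost_minimization}, so
\begin{equation*}
\mathcal{L}_n \;=\; \sum_{j=1}^{J}\mathcal{L}(\mathbf{t}_j,\mathbf{O}^{\star}\bar{\mathbf{y}}_j) \;\le\; \sum_{j=1}^{J}\mathcal{L}(\mathbf{t}_j,\widetilde{\mathbf{O}}\bar{\mathbf{y}}_j) \;=\; \sum_{j=1}^{J}\mathcal{L}(\mathbf{t}_j,\hat{\mathbf{O}}\hat{\mathbf{y}}_j) \;=\; \mathcal{L}_o,
\end{equation*}
which is the desired inequality.

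The only subtle point is the invertibility of $\mathbf{R}$: the argument needs $\mathbf{R}$ to have full column rank so that $\mathbf{R}^{\dagger}$ exists. Since $\mathbf{R}$ is drawn from an absolutely continuous distribution on $\mathbb{R}^{m\times n}$ with $m\ge n$, rank deficiency occurs on a measure-zero set and can be discarded (or absorbed into a redraw). Everything else is a one-line application of the LFP identity followed by using the candidate to upper bound the minimum — no optimization-theoretic machinery is required, and the inequality holds deterministically once $\epsilon \ge \epsilon_0$.
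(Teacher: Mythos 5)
Your proposal is correct and follows essentially the same route as the paper: both construct the candidate $\hat{\mathbf{O}}\mathbf{R}^{\dag}\mathbf{U}_m$, use the LFP identity $\mathbf{U}_m\mathbf{g}(\mathbf{V}_m\mathbf{R}\hat{\mathbf{y}}_j)=\mathbf{R}\hat{\mathbf{y}}_j$ together with $\mathbf{R}^{\dag}\mathbf{R}=\mathbf{I}_n$ to recover the old predictions, set $\epsilon$ so that this candidate is feasible, and invoke optimality of $\mathbf{O}^{\star}$. The only (immaterial) differences are that you allow any $\epsilon\ge\|\hat{\mathbf{O}}\mathbf{R}^{\dag}\mathbf{U}_m\|_F$ whereas the paper picks equality to keep the feasible set as small as possible, and you make explicit the almost-sure full-column-rank argument that the paper states informally.
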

\begin{proof}
	Let us define $\mathcal{L}_n(\mathbf{O})=\sum_{j=1}^{J} \mathcal{L}(\mathbf{t}_j,\mathbf{O}\bar{\mathbf{y}}_{j})$. We first show that there exists a point in the feasible set of \eqref{eq:ReBoost_minimization} that results in the old training loss, i.e., $\exists \bar{\mathbf{O}}, \mathcal{L}_n(\bar{\mathbf{O}}) = \mathcal{L}_o$. Consider $\bar{\mathbf{O}}=\hat{\mathbf{O}} \mathbf{R}^{\dag} \mathbf{U}_{m}$, where $\mathbf{U}_m = \left[  
	\mathbf{I}_m  \,\, - \mathbf{I}_m \right]$, and $\dag$ denotes pseudo-inverse of a matrix. Note that when $m \ge n$, the random matrix $\mathbf{R}$ is full-column rank and therefore, its pseudo-inverse exits. Then, by using LFP in Proposition \ref{Proposition_LFP} and the feature vector in \eqref{eq:ReBoost_featuree_vectors}, we have 
	\begin{align}
	\mathcal{L}_n(\bar{\mathbf{O}}) & = \sum_{j=1}^{J} \mathcal{L}(\mathbf{t}_j,\hat{\mathbf{O}} \mathbf{R}^{\dag} \mathbf{U}_{m}\bar{\mathbf{y}}_{j}) \nonumber \\
	& = \sum_{j=1}^{J} \mathcal{L}(\mathbf{t}_j,\hat{\mathbf{O}} \mathbf{R}^{\dag} \mathbf{U}_{m}\mathbf{g} (\mathbf{V}_m \mathbf{R} \hat{\mathbf{y}}_{j})) \nonumber \\
	& = \sum_{j=1}^{J} \mathcal{L}(\mathbf{t}_j,\hat{\mathbf{O}} \mathbf{R}^{\dag} \mathbf{R} \hat{\mathbf{y}}_{j})) \nonumber \\
	& = \sum_{j=1}^{J} \mathcal{L}(\mathbf{t}_j,\hat{\mathbf{O}} \hat{\mathbf{y}}_{j})) = \mathcal{L}_o.
	\end{align}
	Therefore, by choosing $\epsilon= \| \hat{\mathbf{O}} \mathbf{R}^{\dag} \mathbf{U}_{m} \|_F$, we make sure to have the smallest feasible set possible to avoid overfitting while including $\bar{\mathbf{O}}$ in the search space. On the other hand, we know that $\mathcal{L}_n(\mathbf{O}^{\star}) \le \mathcal{L}_n(\bar{\mathbf{O}})$ by definition in \eqref{eq:ReBoost_minimization}, which concludes the proof.
\end{proof}

Note that the optimization in \eqref{eq:ReBoost_minimization} is not computationally complex as it only minimizes the total loss with respect to the output weight $\mathbf{O}$. The weight matrix of the ReLU layer in \eqref{eq:ReBoost_featuree_vectors} is fixed before training, and there is no need for error backpropagation. Therefore, by solving \eqref{eq:ReBoost_minimization}, we have added a ReLU dense layer to a trained network to reduce its training loss. 

Detailed steps of training ReDense are outlined in Algorithm \ref{algorithm}. Note that we have not included the update step of learning rate $\eta_t$ in Algorithm \ref{algorithm} for the sake of simplicity. In practice, training of ReDense can be done via any of the common adaptive gradient descent variants such as ADAM, Adagrad, etc. An illustration of ReDense training is shown in Figure \ref{fig:ReDense}, where the grayscale layers indicate being freezed during the training. Therefore, ReDense is not prone to vanishing gradient issue and is computationally fast. Note that the original network can be any arbitrary neural network architecture, making ReDense a universal technique to reduce the training loss.

\begin{algorithm}[t!]
	\caption{: ReDense Training}
	\label{algorithm}
	\mbox{Input: }
	\begin{algorithmic}[1]
		\STATE $\hat{\mathbf{O}}$: output weight matrix of the old trained neural network
		\STATE $\{(\hat{\mathbf{y}}_j, \mathbf{t}_j)\}_{j=1}^{J}$: training features of the output layer and their corresponding target vectors
		\STATE $\eta_0$: initial learning rate of the gradient descent optimizer
		\STATE $\tau$: number of epochs of the gradient descent optimizer
	\end{algorithmic}
	\mbox{Initialization:}
	\begin{algorithmic}[1]
		\STATE Construct the ReLU layer as $\bar{\mathbf{y}}_{j} = \mathbf{g} (\mathbf{V}_m \mathbf{R} \hat{\mathbf{y}}_{j})$ 
		\STATE $\epsilon \leftarrow  \| \hat{\mathbf{O}} \mathbf{R}^{\dag} \mathbf{U}_{m} \|_F$
		\STATE $\mathbf{O}_0 \leftarrow \hat{\mathbf{O}} \mathbf{R}^{\dag} \mathbf{U}_{m}$
		\STATE $t \leftarrow 0$
	\end{algorithmic}
	\mbox{Training:}
	\begin{algorithmic}[1]
		\REPEAT 
		\STATE $\mathbf{G}_t \leftarrow \sum_{i=1}^{N} \nabla_{\mathbf{O}} \mathcal{L}(\mathbf{t}_i,\mathbf{O}_t\bar{\mathbf{y}}_{i})$
		\STATE $\mathbf{Z} \leftarrow \mathbf{O}_{t} - \eta_t \mathbf{G}_t $
		\IF{$\|\mathbf{Z}\|_F \geq \epsilon$}
		\STATE $\mathbf{O}_{t+1} \leftarrow \mathbf{Z}.(\frac{\epsilon}{\|\mathbf{Z}\|_F})$
		\ENDIF
		\STATE $t \leftarrow t+1$
		\UNTIL  $t = \tau$ 
	\end{algorithmic}
\end{algorithm}

\section{Experimental Results}
In this section, we carry out several experiments on a variety of neural network architectures to show the performance capabilities of ReDense. First, we construct a simple MLP and investigate the effects of changing the number of random hidden neurons $m$ in ReDense. Then, we illustrate the performance of ReDense on more complicated models such as convolutional neural networks with various types of loss functions and output activations. Finally, we choose some of the state-of-the-art networks on benchmark classification datasets and try improving their performance via ReDense. Our open-sourced code can be found at \url{https://github.com/alirezamj14/ReDense}.

\subsection{Multilayer Perceptrons}
\label{subsec:DNN}
In this section, we use a simple two-layer feedforward neural network with $n=500$ hidden neurons to train on the MNIST dataset. The network is trained for 100 epochs with respect to softmax cross-entropy loss by using ADAM optimizer with a batch size equal to 128 and a learning rate of $10^{-6}$. This combination leads to test classification accuracy of $93.1\%$ on MNIST. By using the trained network, we construct the feature vector $\bar{\mathbf{y}}_j$ according to \eqref{eq:ReBoost_featuree_vectors} for different choices of $m = 500, 1000, 1500, 2000$ and add a ReDense layer to the network. Note that here we must have $m \ge n = 500$.

Figure \ref{fig:MLP_different_n} shows the learning curves of ReDense for different values of $m$. Note that the initial performance at epoch $0$ is, in fact, the final performance of the above MLP with $93.1\%$ testing accuracy. We use full-batch ADAM optimizer with a learning rate of $10^{-5}$ for ReDense in all cases. By comparing the curves of training and testing loss, we observe that ReDense consistently reduces both training and testing loss while maintaining a small generalization error. Interestingly, ReDense achieves a higher testing accuracy for $m=500$ and improves the testing accuracy by $2.4\%$. The reason for this behaviour is that increasing $m$, in fact, reduces the value of $\epsilon= \| \hat{\mathbf{O}} \mathbf{R}^{\dag} \mathbf{U}_{m} \|_F$ in ReDense, leading to a tighter overfitting constraint as the dimensions of the feature vectors increases. One can, of course, manually choose larger values of $\epsilon$ for ReDense but we found in our experiments that a looser $\epsilon$ leads to poor generalization. Therefore, we use the smallest possible value of $m = n$ in the rest of the experiments.

\begin{figure*}[t!]
	\centering
	\begin{multicols}{3}
		\includegraphics[width=0.32\textwidth]{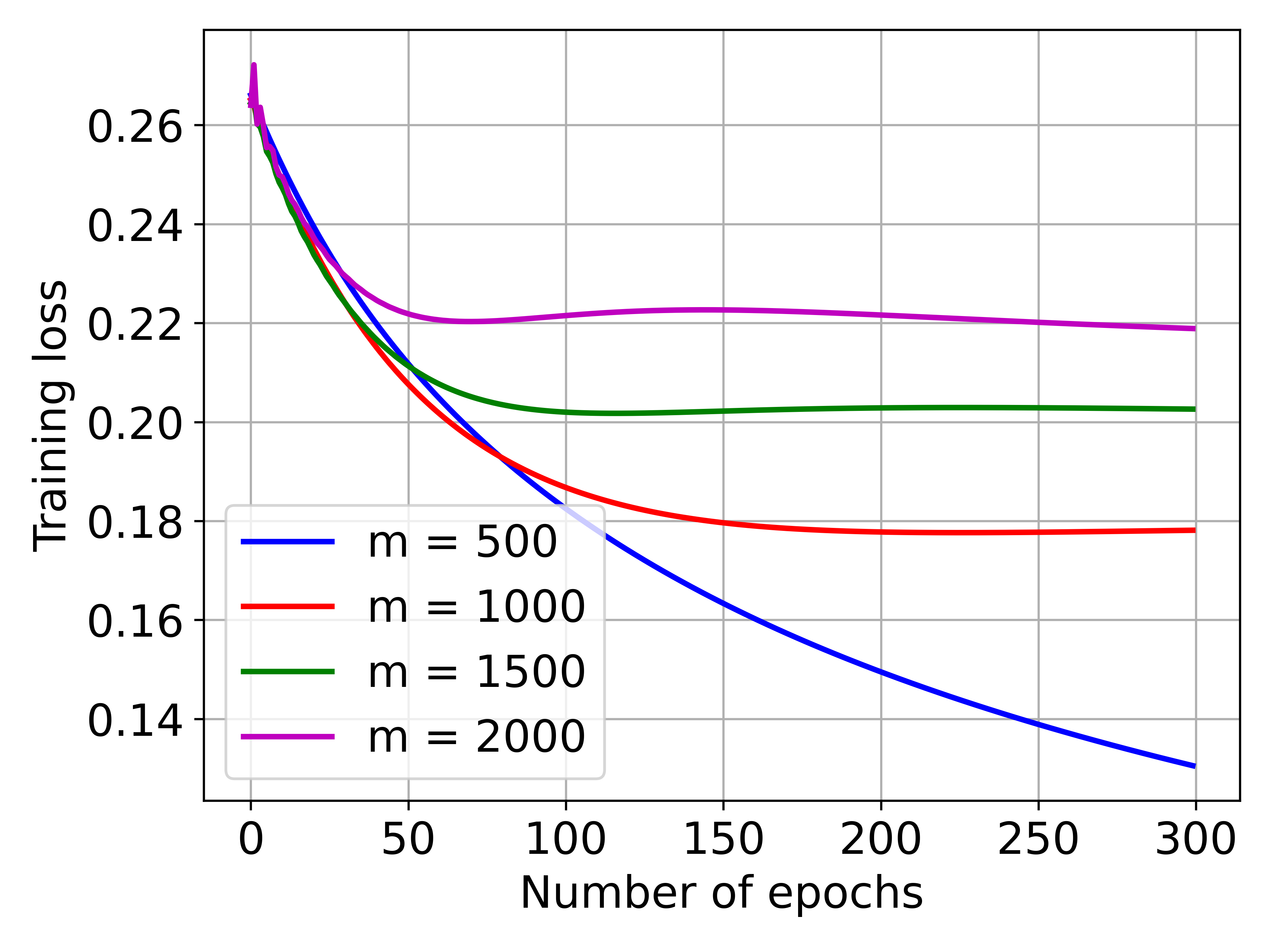}
		\subcaption{Training loss}
		\includegraphics[width=0.32\textwidth]{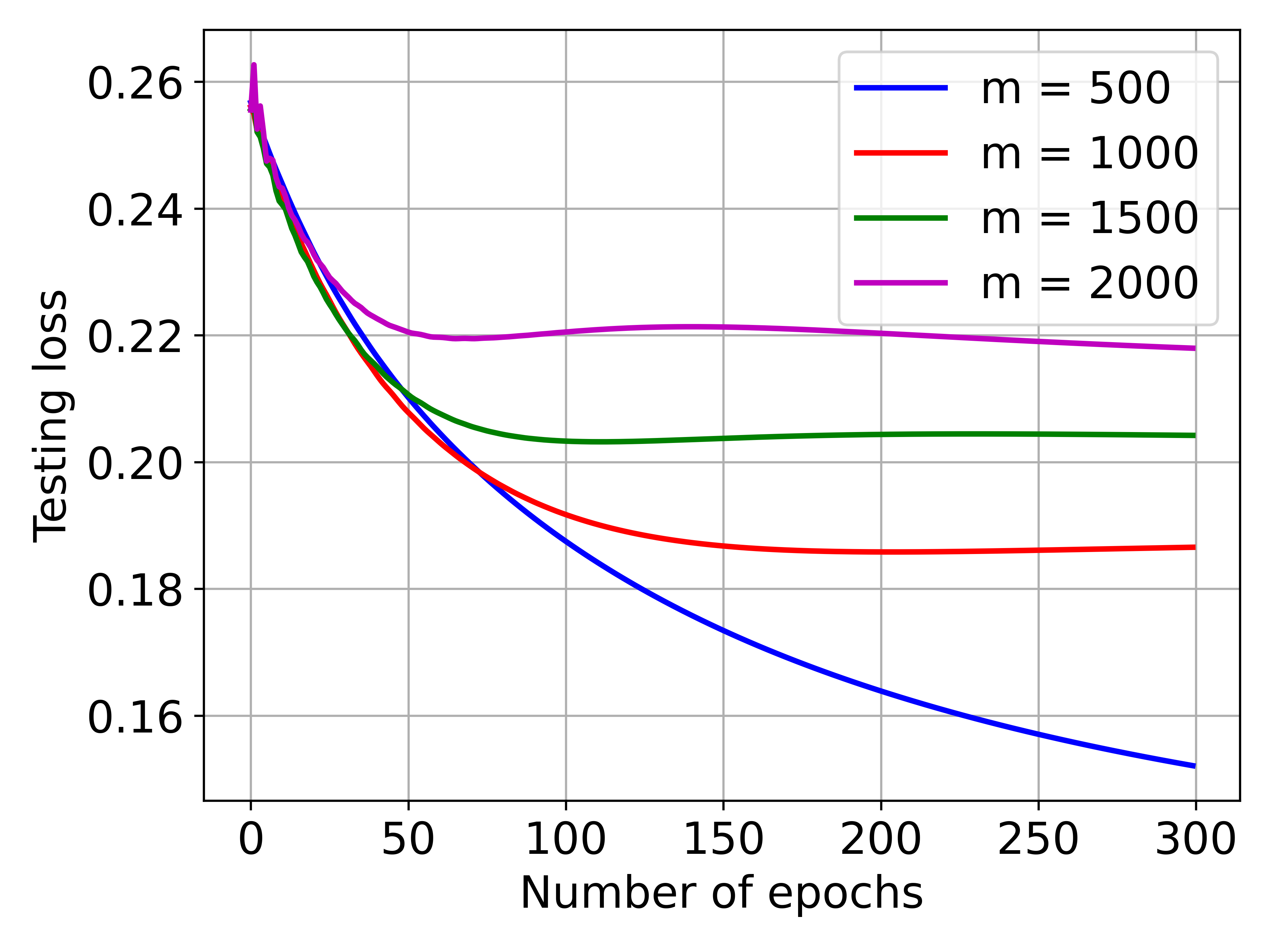}
		\subcaption{Testing loss}
		\includegraphics[width=0.32\textwidth]{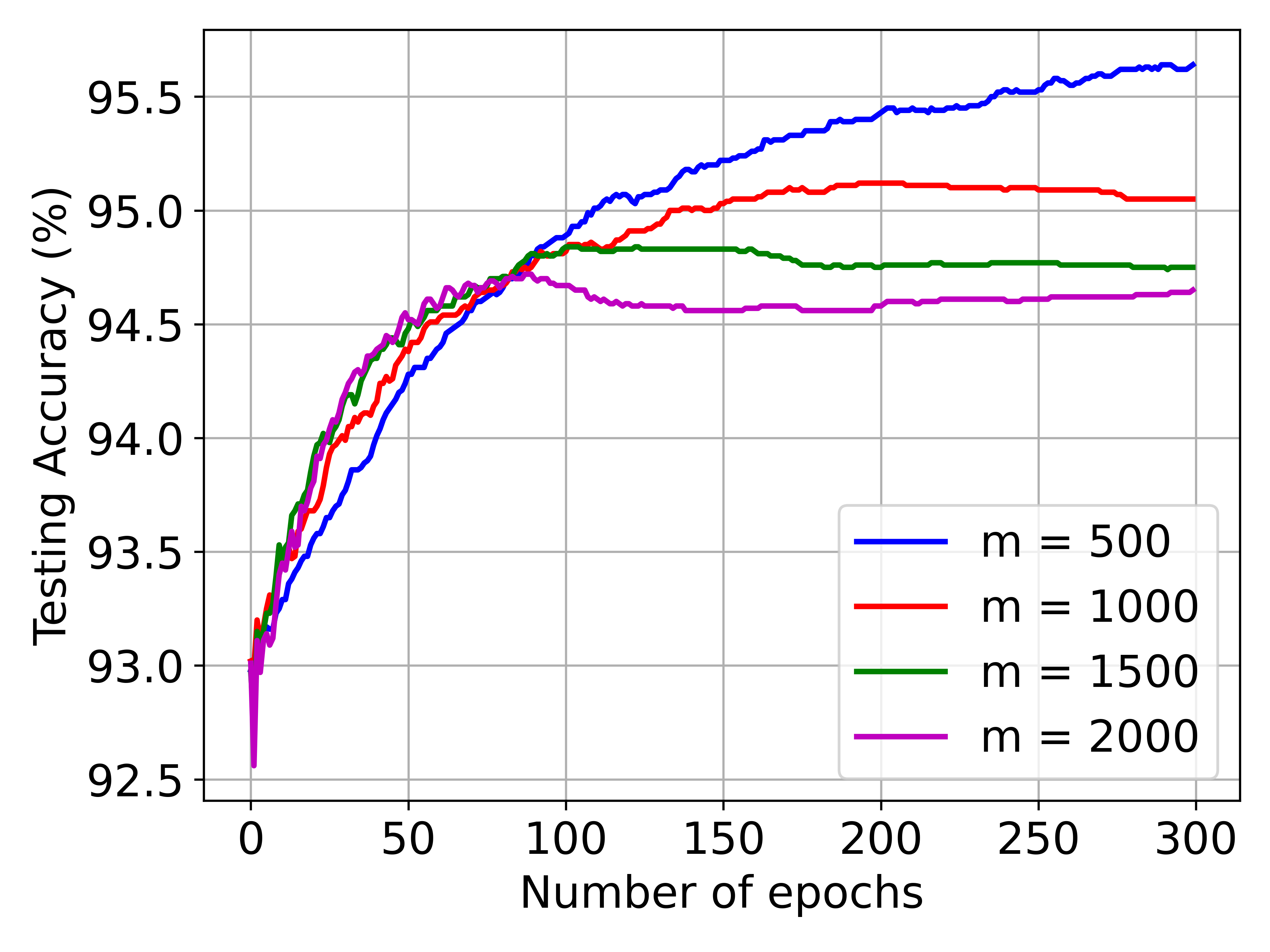}
		\subcaption{Testing accuracy}
	\end{multicols}
	\caption{Performance improvement versus number of epochs for a MLP trained on MNIST by using softmax cross-entropy loss.}
	\label{fig:MLP_different_n}
\end{figure*}

\subsection{Convolutional Neural Networks}
\label{subsec:CNN}
In this section, we use a convolutional neural network with the following architecture: two Conv2D layer with 32 filters of size $3 \times 3$ followed by a MaxPooling2D layer of size $3 \times 3$, dropout layer with $p=0.25$, Conv2D layer with 64 filters of size $3 \times 3$, Conv2D layer with 32 filters of size $3 \times 3$, MaxPooling2D layer of size $3 \times 3$, dropout layer with $p=0.25$, flatten layer, dense layer of size 512, dropout layer with $p=0.5$, and dense layer of size 10. We train this network on the CIFAR-10 dataset using ADAM optimizer with a batch size of 128 for 15 epochs with respect to different loss functions. We use 10 percent of the training sample as a validation set. Note that in this case $n=512$ is the dimension of the feature vector $\hat{\mathbf{y}}_j$. We choose the smallest possible value for the number of random hidden neurons of ReDense $m = 512$ as it is shown to be the best choice in Section \ref{subsec:DNN}. 

Figure \ref{fig:CNN_different_loss} shows the learning curves of ReDense for different choices of CNN loss functions, namely, cross-entropy (CE) loss, mean-square-error (MSE) loss, Poisson loss, and Huber loss. Note that the initial performance at epoch $0$ is, in fact, the final performance of the above CNN in each case. We use full-batch ADAM optimizer with a learning rate of $10^{-4}$ for ReDense in all cases. By comparing the curves of training and testing loss, we observe that ReDense consistently reduces both training and testing loss while maintaining a small generalization error. It can be seen that ReDense approximately improves testing accuracy by $1$-$2\%$ in all cases. This experiment shows that ReDense can improve the performance of neural networks regardless of the choice of training loss. Note that we have used softmax cross-entropy loss in all the experiments for training ReDense.  

\begin{figure*}[t!]
	\centering
	\begin{multicols}{3}
		\includegraphics[width=0.32\textwidth]{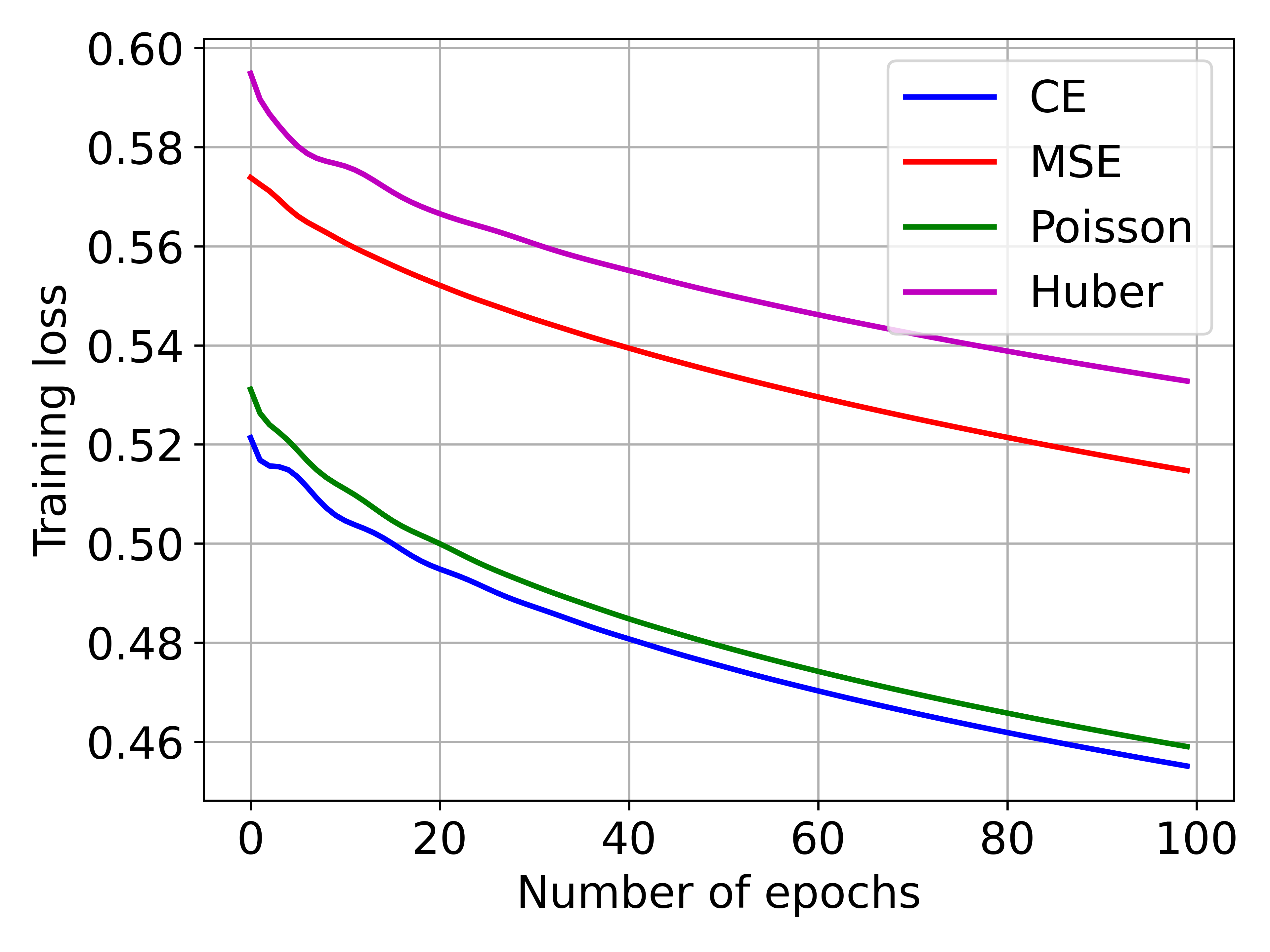}
		\subcaption{Training loss}
		\includegraphics[width=0.32\textwidth]{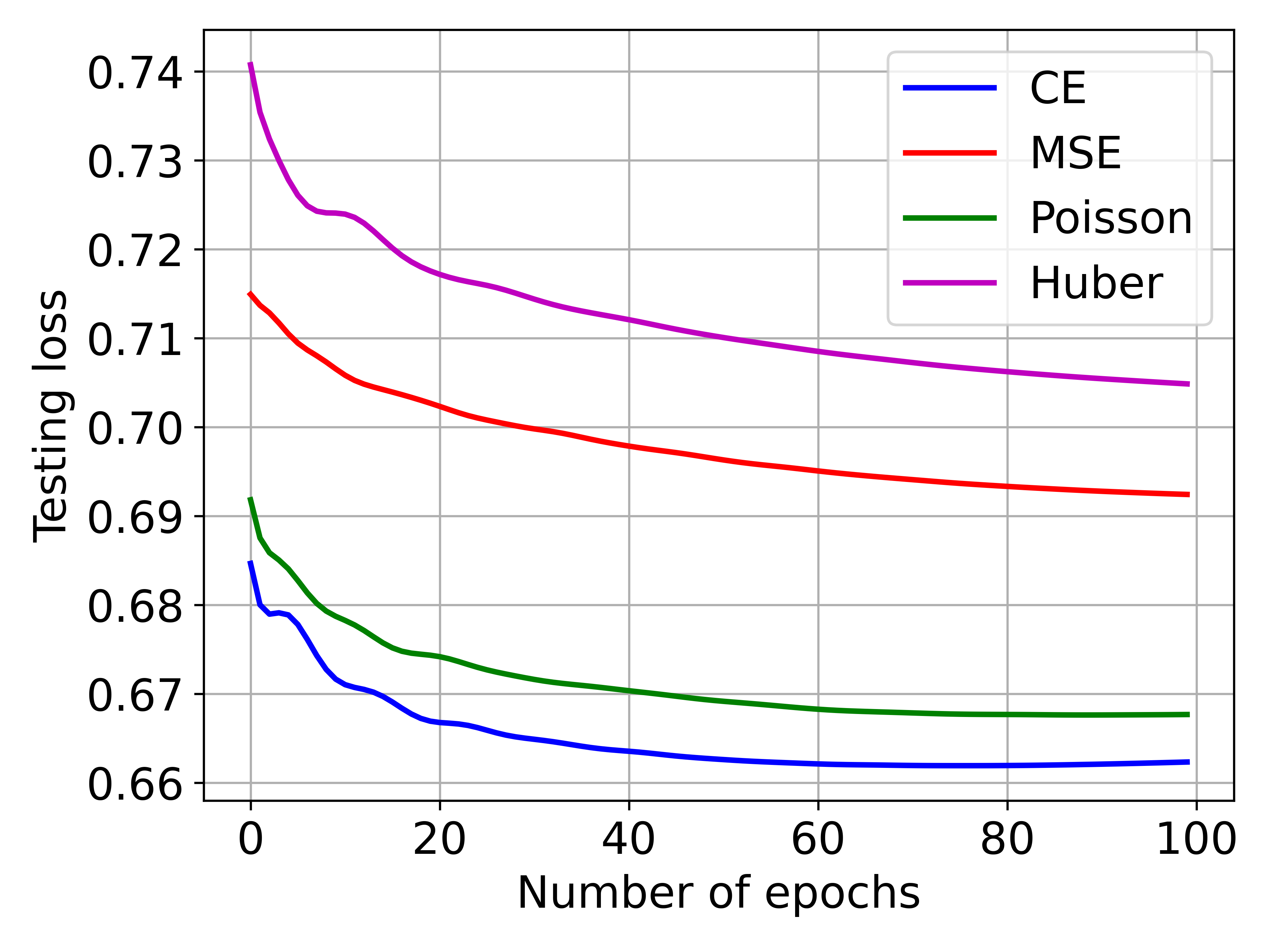}
		\subcaption{Testing loss}
		\includegraphics[width=0.32\textwidth]{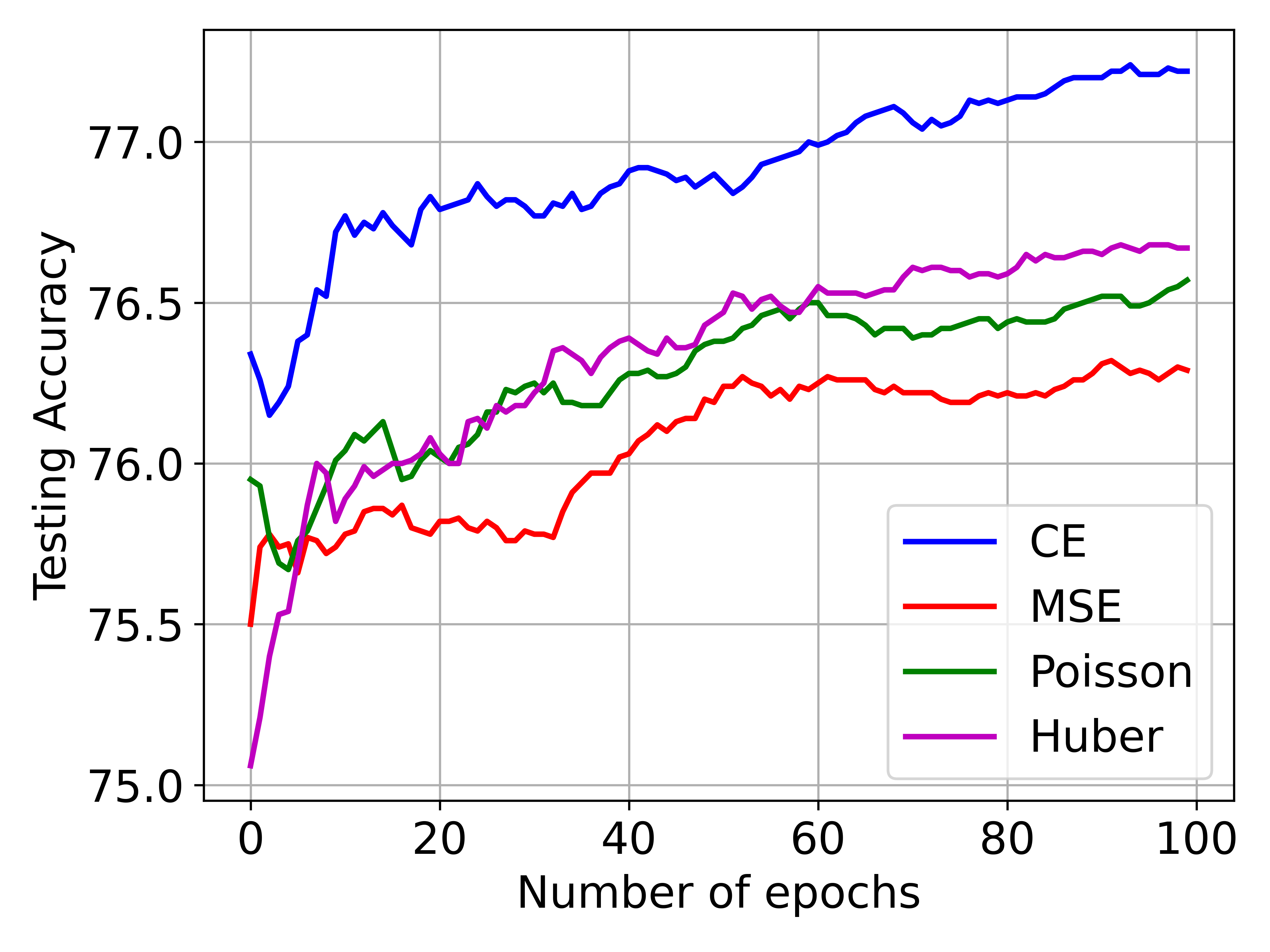}
		\subcaption{Testing accuracy}
	\end{multicols}
	\caption{Performance improvement versus number of epochs for a CNN trained on CIFAR-10 by using different loss functions.}
	\label{fig:CNN_different_loss}
\end{figure*}

\subsection{State-of-the-art Networks}
\label{subsec:Boost_SOTA}
We apply ReDense on the best performer architecture on CIFAR datasets to observe the improvement in classification accuracy. To the best of our knowledge, Big Transfer (BiT) \cite{kolesnikov2020big} models are the top performer on CIFAR datasets in the literature. BiT uses a combination of fine-tuning techniques and transfer learning heuristics to achieve state-of-the-art performance on a variety of datasets. BiT-M-R50x1 model uses a ResNet-50 architecture, pretrained on the public ImageNet-21k dataset, as the baseline model and by applying the BiT technique, achieves testing accuracy of $95.35\%$ and $82.64\%$ on CIFAR-10 and CIFAR-100, respectively.  

\begin{table}[t!]
		\centering
		\caption{Testing accuracy ($\%$) with and without ReDense}
		\label{table:Classification_accuracy}
		\setlength{\tabcolsep}{10pt}
		\renewcommand{\arraystretch}{1.2}
		\begin{tabular}{|c|c|c|c|}
			\hline
		    \multirow{2}{*}{Dataset} & \multicolumn{2}{|c|}{BiT-M-R50x1} \\ \cline{2-3}
			& with ReDense & without \\
			\hline \hline
			CIFAR-10 & \textbf{96.61} & 95.35 \\
			\hline
			CIFAR-100 & \textbf{84.28} & 82.64 \\ 
			\hline
		\end{tabular}
\end{table} 

We add a ReDense layer with $m = 2048$ random hidden neurons since the dense layer of BiT-M-R50x1 model has a dimension of $n=2048$. We use full-batch ADAM optimizer with a learning rate of $10^{-4}$ and train ReDense layer with respect to softmax cross-entropy loss for 100 epochs. We observe that BiT-M-R50x1 + ReDense achieves testing accuracy of $96.61\%$ and $84.28\%$ on CIFAR-10 and CIFAR-100, respectively, as shown in Table \ref{table:Classification_accuracy}. This experiment illustrates the capability of ReDense in improving the performance of highly optimized and deep networks. It is worth noting that ReDense is trained within a few minutes on a laptop while training BiT-M-R50x1 required several hours in a GPU cloud.

However, the power of ReDense is limited to scenarios where the network is not overfitted with a $100\%$ training accuracy since ReDense tries to reduce the training loss. For example, we also tested BiT-M-R101x3 model which uses a ResNet-101 as the baseline model, and achieves $98.3\%$ testing accuracy on CIFAR-10 but with a training accuracy of $100\%$. After adding the ReDense layer, we observed no tangible improvement as the network was already overfitted and there was no room for improvement. Early stopping of the baseline model, including dropout, data augmentation, and unfreezing some of the previous layers of the network are among the possible options to avoid such extreme overfitting in future works.

\section{Conclusion}
We showed that by adding a ReDense layer, it is possible to improve the performance of various types of neural networks including the state-of-the-art if the network is not overfitted. ReDense is a universal technique in the sense that it can be applied to any neural network regardless of its architecture and loss function. Training of ReDense is simple and fast due to its shallow structure and does not require retraining the network from scratch. A potential extension of this work would be to apply ReDense along with other performance improvement techniques such as dropout and data augmentation to further increase the predictive capabilities of neural networks. Besides, ReDense can also be applied in any of the intermediate layers of a network to reduce the training loss.  

\bibliographystyle{IEEEbib}
\bibliography{javid}

\end{document}